\documentclass[12pt, final]{l4dc2023}


\title[Policy Learning for Active Target Tracking over Continuous $SE(3)$ Trajectories]{Policy Learning for Active Target Tracking over Continuous $SE(3)$ Trajectories}
\usepackage{times}



\DeclareMathOperator*{\vech}{vech}
\usepackage{algorithm2e}
\usepackage{tabularx,booktabs}
\usepackage{graphicx}
\usepackage{xcolor}
\usepackage{soul}
\usepackage{units}
\usepackage{footnote}
\usepackage{multirow}
\usepackage{fancyvrb} 
\VerbatimFootnotes    

\newcommand{\ith}{^{(i)}}


\newcommand{\calF}{{\cal F}}

\newcommand{\calI}{{\cal I}}

\newcommand{\calN}{{\cal N}}



\newcommand{\bfe}{\mathbf{e}}

\newcommand{\bfp}{\mathbf{p}}
\newcommand{\bfq}{\mathbf{q}}

\newcommand{\bfs}{\mathbf{s}}

\newcommand{\bfu}{\mathbf{u}}
\newcommand{\bfv}{\mathbf{v}}
\newcommand{\bfw}{\mathbf{w}}
\newcommand{\bfx}{\mathbf{x}}
\newcommand{\bfy}{\mathbf{y}}
\newcommand{\bfz}{\mathbf{z}}

\newcommand{\bfzeta}{\boldsymbol{\zeta}}
\newcommand{\bfeta}{\boldsymbol{\eta}}
\newcommand{\bftheta}{\boldsymbol{\theta}}

\newcommand{\bfmu}{\boldsymbol{\mu}}

\newcommand{\bfpi}{\boldsymbol{\pi}}

\newcommand{\bfomega}{\boldsymbol{\omega}}
\newcommand{\bfxi}{\boldsymbol{\xi}}



\newcommand{\bbH}{\mathbb{H}}

\newcommand{\bbR}{\mathbb{R}}
\newcommand{\bbS}{\mathbb{S}}

\def\fr{\frac}
\def\pa{\partial}

\def\ba{\begin{align}}

\def\cal{\mathcal}

\def\R{\mathbb R}

\DeclareMathOperator*{\tr}{tr}

\newtheorem{prop}{Proposition}

\author{%
 \Name{Pengzhi Yang} \Email{peyang@ucsd.edu}\\
 \Name{Shumon Koga} \Email{skoga@ucsd.edu}\\
 \Name{Arash Asgharivaskasi} \Email{aasghari@ucsd.edu}\\
 \Name{Nikolay Atanasov} \Email{natanasov@ucsd.edu}\\
 \addr Department of Electrical and Computer Engineering, University of California San Diego, La Jolla, CA 92093%
}

\begin{document}

\maketitle


\begin{abstract}%
This paper develops a \emph{model-based policy gradient algorithm} for tracking dynamic targets using a mobile agent equipped with an onboard sensor with limited field of view. The task is to obtain a continuous control policy for the mobile agent to collect sensor measurements that reduce uncertainty in the target states, measured by the target distribution entropy. We design a neural network control policy with the agent $SE(3)$ pose and the mean vector and information matrix of the joint target distribution as inputs and attention layers to handle variable numbers of targets. We also derive the gradient of the target entropy with respect to the network parameters explicitly, allowing efficient model-based policy gradient optimization.
\end{abstract}





\begin{keywords}%
Active target tracking, model-based reinforcement learning, SLAM %
\end{keywords}

\VerbatimFootnotes


\subsection*{Supplementary Material}
Open-source implementation: \href{https://github.com/ExistentialRobotics/RL_Active_Multi_Target_Tracking}{github.com/ExistentialRobotics/RL\_Active\_Multi\_Target\_Tracking}


\section{Introduction}


Active target tracking is a problem in which the trajectory of a sensing agent is planned to reduce uncertainty in the state of a dynamic target of interest. This problem is motivated by several applications, including search and rescue \citep{kumar2004robot}, security and surveillance \citep{grocholsky2006cooperative}, wildfire detection \citep{julian2019distributed}, and pursuit evasion \citep{chung2011search}. Active information gathering in Simultaneous Localization and Mapping (SLAM) \citep{cadena2016past, placed2022survey} is an example of active target tracking in which the target is the (static) map of the environment. The challenge of the general active target tracking problem is inherent in predicting the future target state, optimizing the sensing agent trajectory with a limited Field of View (FoV), and taking into account the stochasticity of the target motion and sensor observations.

While the general active target tracking problem is posed as a stochastic optimal control problem due to the probabilistic inference of the target states, some earlier works have reduced this complexity. Under the assumption of linear Gaussian target motion and sensor observation models, active target tracking with an information-theoretic cost results in a deterministic optimal control problem, as shown in \cite{le2009trajectory}. \cite{Atanasov14ICRA} proposed a computationally efficient non-myopic planning approach with a strong performance guarantee even under a long planning horizon. \cite{schlotfeldt2019maximum} developed a consistent heuristic for applying A$^*$ search to the active information acquisition by deriving maximum upper bounds for the information measure. A multi-agent multi-target formulation of active information acquisition and associated scalable algorithms were studied by \cite{atanasov2015decentralized, schlotfeldt2018anytime, kantaros2019asymptotically, cai2021non}. While those works consider planning over discrete control space, \cite{koga2021active} proposed ``iterative Covariance Regulation'' (iCR), which optimizes the sensing trajectory over continuous $SE(3)$ space by deriving an analytical gradient of the cost with respect to the multi-step control sequence. Extensions of the work to occlusion-aware planning and to active SLAM under uncertain agent state was developed by \cite{asgharivaskasi2022active} and \cite{koga2022active}, respectively. However, all the aforementioned works compute control inputs for a given environment and cannot be applied to a new environment without replanning. 

Learning a control policy from training data obtained over several environments has been studied in the context of reinforcement learning (RL) \citep{sutton2018reinforcement}. RL methods employing deep neural network representations of the policy and value functions have been developed for both discrete control spaces \citep{mnih2015human} applied to games and continuous control spaces \citep{lillicrap2015continuous,schulman2017proximal} applied to robotics tasks. Learning a policy for active target tracking was proposed by \cite{jeong2019learning} using $Q$-learning to maximize the mutual information between the sensor data and the target states. \cite{hsu2021scalable} developed a multi-agent version of \cite{jeong2019learning} by incorporating an attention-block in the Q-network architecture. In addition, \cite{tang2021sensory} also leveraged an attention mechanism to achieve permutation-invariance in multi-agent settings. \cite{chen2020autonomous} focused on active landmark mapping using a graph neural network representing the exploration policy,
which is trained by Q-learning within a framework of Expectation Maximization \citep{wang2020autonomous}. \cite{chaplot2020learning} proposed
a modular and hierarchical approach to obtain a local policy by imitation learning from analytical path planners with a learned SLAM module and a global policy to maximize area coverage. \cite{lodel2022look} applied PPO \citep{schulman2017proximal} for learning an information-theoretic active mapping policy to acquire reference viewpoints that maximize reward with local sensing of obstacles and the agent position. \cite{yang2022learning} proposed a continuous trajectory learning method for active perception to localize multiple static landmarks, utilizing differentiable field of view for reward shaping and an attention-based neural network architecture. Learning low-level continuous control (e.g., velocity or torque) for $SE(3)$ agent kinematics using model-free RL methods is challenging because obtaining stable policy convergence requires a sufficiently large amount of experience, especially for complicated tasks. 



Utilizing a known or predicted state transition model in learning algorithms can significantly reduce the required amount of samples and computation relative to model-free RL methods. \cite{levine2013guided} developed a guided policy search that optimizes the system trajectory associated with the model by Differential Dynamic Programming (DDP) to achieve direct policy learning in control. Several variants and extensions of guided policy search were proposed by \cite{levine2014learning} for policy learning with unknown dynamics and by \cite{levine2016end} to obtain an end-to-end policy from visual sensing to robot action. \cite{luo2019reinforcement} incorporated a force and torque model into RL to enable high-precision robot manipulation tasks. A recent comprehensive review of the model-based RL was presented by \cite{janner2019trust}.

\textbf{Contributions:} The contributions of the paper are summarized as follows.
\begin{itemize}
    \item We develop a novel model-based policy gradient algorithm for tracking multiple dynamic targets over continuous $SE(3)$ trajectories. A differentiable field-of-view (FoV) formulation is incorporated to enable offline learning for sensor models with limited FoV. 
    
    \item We design a neural network policy architecture with an attention block to handle multiple targets and with padding and masking to enable learning over a varying number of targets during training.  
\end{itemize}

\section{Problem Statement}

Consider an agent with pose $T_k \in SE(3) \subset \bbR^{4 \times 4}$ at time $t_k \in \R_+$, where $\{t_k\}_{k=0}^{K}$ for some $K \in {\mathbb N}$ is an increasing sequence. The definition of pose and its discrete-time kinematic model are:
\begin{align} 
  T_k:= \left[ 
\begin{array}{cc} 
R_k & \bfx_k \\
{\mathbf 0}_{3 \times 1}^\top & 1 
\end{array} 
\right],  \quad \label{SE2dyn} 
 T_{k+1} =  T_{k} \exp \left(\tau_k \hat \bfu_k \right) , 
\end{align}  
where $\bfx_k \in  \R^3$ is position, $R_k \in SO(3) \subset \bbR^{3 \times 3} $ is orientation, $\tau_k := t_{k+1} - t_k > 0$ is the sampling-time interval, and $\bfu_k = [\bfv_k^\top, \bfomega_k^\top]^\top \in \R^6$ is a control input, consisting of linear velocity $\bfv_k \in \R^3$ and angular velocity $\bfomega_k \in \R^3$. The hat operator $ \hat{(\cdot)} : \R^6 \to se(3)$ maps vectors in $\R^6$ to the Lie algebra $se(3)$ associated with the $SE(3)$ Lie group \citep{barfoot2017state}. 
 
We consider a finite number of moving targets $\bfy_k = [\bfy^{(1)}_k, \dots, \bfy^{(n_l)}_k]$, where $\bfy^{(j)}_k \in \bbR^{n_y}$ for $j \in \{1, \dots, n_l\}$ denotes the $n_y$-dimensional state of $j$-th target at time $k$ and $n_l$ is the total number of targets. We assume that each target has homogeneous dynamics governed by a linear Gaussian process:
\begin{equation} \label{eq:target-dyn}
    \bfy^{(j)}_{k+1} = A \bfy^{(j)}_k + B \bfxi_k^{(j)} + \bfw_k^{(j)},   
\end{equation}
where $A: \bbR^{n_y \times n_y}$ and $B: \bbR^{n_y \times m_y}$ are the system matrices, $\bfxi_k^{(j)} \in \bbR^{m_y}$ is a known target input, and $ \bfw_{k}^{(j)} \sim \calN(0, W_k)$ is a stochastic process noise assumed to be Gaussian with zero mean and covariance $W_k \in \bbR^{n_y \times n_y}$. 

The agent is equipped with an onboard sensor for tracking the target states. Let $\calF \subset \bbR^3$ represent the FoV of the sensor within the agent's body frame. The set of target indices within the FoV is:
\begin{align} \label{eq:IF} 
\calI_{{\calF}}(T, \{\bfy^{(j)}\}) = \left\{ j \in \{1, \dots, n_l\} \mid   \bfq \left(T, \bfzeta(\bfy^{(j)}) \right)  \in \calF \right\}, 
\end{align}
where $\bfzeta: \bbR^{n_y} \to \bbR^3$ transforms the target state to the 3-D coordinate of the target's location, and $\bfq: SE(3) \times \bbR^3 \to \bbR^3 $ returns the agent-body-frame coordinates of $\bfzeta \in \bbR^3$ given by 
\begin{align}
    \bfq(T, \bfzeta) = Q T^{-1} \underline{\bfzeta} , \label{eq:qj-def}
\end{align}
where the projection matrix $Q$ and the homogeneous coordinates $\underline{\bfzeta}$ are defined as:
\begin{align} 
  Q = \left[ 
  \begin{array}{cc} 
  I_3 & {\mathbf 0}_{3 \times 1} 
  \end{array} 
  \right] \in \bbR^{3 \times 4}, \;\;
  \underline{\bfzeta}  = \left[ 
  \begin{array}{c} 
  \bfzeta \\ 1 
  \end{array} 
  \right]  \in \bbR^4.  
\end{align}
Then, a sensor measurement is denoted by $\bfz_k = [ 
    \{ \bfz_k^{(j)}\}_{j \in \calI_{{\calF}}(T_k, \{\bfy^{(j)}_k\})}
    ]\in \bbR^{n_z | \calI_{{\calF}}(T_k, \{\bfy^{(j)}_k\})|}$ where $\bfz_k^{(j)} \in \bbR^{n_z}$ is an observation of $j$-th target with model:
    \begin{align}
    \bfz_k^{(j)} &=  H \bfy_k^{(j)} +  \bfeta_{k}, \quad \bfeta_k \sim \calN (0, V),  \label{eq:sensor}
\end{align}
for all $j \in  \calI_{{\calF}}(T_k, \{\bfy^{(j)}_k\}),$ where the matrix $H \in  \bbR^{n_z \times n_y}$ is the sensor matrix and $V \in \bbR^{n_z \times n_z}$ is the sensing noise covariance.

Our task is to develop a control policy for the agent to minimize uncertainty about the multiple targets using information acquired from the onboard sensor. We consider minimizing the differential entropy
\footnote{The differential entropy of a continuous random variable $Y$ with probability density function $p$ is defined as $\bbH(Y) := - \int p(y) \log p(y) dy$.} 
$\bbH(\bfy_K| \bfz_{0:K},  T_{0:K})$ of the terminal target state $\bfy_K$ given a sequence of sensor observations $\bfz_{0:K}$ and the agent trajectory $T_{0:K}$. Since each target state is independent of all  other target states due to the independent motion model in \eqref{eq:target-dyn}, the problem is equivalent to 
\begin{align}
    \min \sum_{j=1}^{n_l} \bbH(\bfy^{(j)}_K | \bfz_{0:K},  T_{0:K}) . \label{eq:info-cost}
\end{align}
Under the Gaussian target state obeying \eqref{eq:target-dyn} with the linear Gaussian sensor model \eqref{eq:sensor}, the problem \eqref{eq:info-cost} is equivalent to
\begin{align}
    \max \sum_{j=1}^{n_l} \log \det \left( Y_K^{(j)} \right), 
\end{align}
where $Y_K^{(j)}$ is the terminal information matrix of the posterior distribution of target state $\bfy^{(j)}_K$. More precisely, we denote the prior and posterior distributions of the target state given a history of measurements as:
\begin{align}
    \bfy_{k}^{(j)} | \bfz_{0:k-1} \sim \calN(\bfp_k^{(j)}, (P_k^{(j)})^{-1}), \quad \bfy_k^{(j)} | \bfz_{0:k} \sim \calN(\bfmu_k^{(j)}, (Y_k^{(j)})^{-1}) , 
\end{align}
for all $j \in \{1, \dots, n_l\}$ and $k \in \{1, \dots, K\}$. The mean and covariance (or information) matrix are updated based on the Kalman Filter, which is given by the following prediction and update steps \citep{Atanasov14ICRA} (here we omit the superscripts $^{(j)}$ to ease the notation but the variables are for each $j$-th target):
\begin{align}
\textrm{ \textbf{Prediction} (for all $j \in \{1, \dots, n_l\}$):} \quad 
\bfp_{k+1} & = A \bfmu_k + B \bfxi_k, \\
P_{k+1} & = (A Y_k^{-1} A^\top + W_k)^{-1}, \\
\textrm{\textbf{Update} (for $j \in  \calI_{{\calF}}(T_{k+1}, \{\bfy^{(j)}_{k+1}\})$:} \quad 
\bfmu_{k+1} &= \bfp_{k+1} + K_{k+1} (\bfz_{k+1} - H \bfp_{k+1}) , \label{eq:KF-FoV}\\ 
    Y_{k+1} &=  P_{k+1} + H^\top V^{-1} H, \\
K_{k+1} & = P_{k+1}^{-1} H_{k+1}^\top ( H_{k+1} P_{k+1}^{-1} H_{k+1}^{\top} + V_{k+1})^{-1}. 
\end{align}
However, since the update step is performed only for targets within FoV, which is known only after obtaining the sensing with limited FoV, the implementation above 
is not possible in the offline planning stage. To enable planning before measurements are obtained, following \cite{koga2021active}, we introduce a differentiable FoV formulation to relax the index condition and enable gradient computation. Moreover, during training, we suppose that the sensor noise is negligible to enable offline non-myopic planning without acquiring measurements, thereby leading to identical prior and posterior means. We then design the control policy $\bfu_k = \bfpi_{\bftheta}(\bfs_k)$ as a deep neural network, where $\bfs_k$ is the input of the network. Since the differentiable FoV renders the posterior information matrix at next time step dependent on the prior mean and information matrix of the target state, the input of the network is designed to include them. Finally, we consider the following problem for policy optimization. 

\textbf{Problem} \label{prob:AM-policy}
Given a prior Gaussian distribution for moving target $\bfy^{(j)} \sim \calN(\bfp_0^{(j)}, (P_0^{(j)})^{-1})$ with mean $\bfmu_0^{(j)} \in \R^{n_y}$ and information matrix $Y_0^{(j)} \in \bbS^{n_y\times n_y}_{\succ 0}$ for all $j \in \{1, \dots, n_l\}$, optimize the parameters $\bftheta \in \R^{n_p}$ of a control policy $\bfu_k = \bfpi_{\bftheta}(\bfs_k)$ where $\bfs_k = [ \textrm{log}\left( T_k \right)^{\vee}, \{\bfp_{k+1}^{(j)}, \vech(P_{k+1}^{(j)})\}_{j=1}^{n_l}]$ by solve the following policy optimization problem:
\begin{align} 
\max_{\bftheta \in \bbR^{n_p}} \hspace{1mm} \sum_{j=1}^{n_l} \log \det (Y_{K}^{(j)}), \label{prob:reward-policy}
\end{align} 
subject to 
\begin{align} \label{prob:model-policy} 
T_{k+1} & = T_k \exp (\tau \bfpi_{\bftheta}(\bfs_k)) \\
    \bfp^{(j)}_{k+1} & = A \bfp^{(j)}_k + B \bfxi_k^{(j)}, \label{eq:mean-update} \\
    P_{k+1}^{(j)} & = ( A (Y_k^{(j)})^{-1} A^\top + W_k)^{-1}, \label{eq:prediction-update} \\
    Y_{k+1}^{(j)} & = P_{k+1}^{(j)} + M (T_{k+1}, \bfp_{k+1}^{(j)}) , \label{eq:info-update}\\
    M(T, \bfp^{(j)}) &= \left(  1 - \Phi( d(\bfq(T, \bfp^{(j)}), {\mathcal F})) \right) H^\top V^{-1} H,\label{eq:M} 
\end{align} 
for all $j \in \{1, \dots, n_l\}$ and $k \in \{0, \dots, K-1\}$, where \eqref{eq:M} is derived in \cite{koga2021active}, $\Phi$ is a probit function \citep{bishop2006pattern}, defined by the Gaussian CDF $\Phi : \bbR \to [0, 1]$,  $ \Phi(x) = \frac{1}{2} \left[ 1 + \textrm{erf} \left(\frac{x}{\sqrt{2} \kappa} - 2  \right) \right] $, and $d$ is a signed distance function associated with the FoV $\calF$ defined below.

\begin{definition}
The \emph{signed distance function} $d : \R^3 \to \R $ associated with a set $\calF \subset \bbR^3$ is:
\begin{align}
     d(\bfq, {\mathcal F})  = 
     \begin{cases}
     - \min_{\bfq^* \in \pa {\mathcal F}}  || \bfq - \bfq^*||, \quad \textrm{if} \quad \bfq \in {\mathcal F}, \\
     \phantom{-} \min_{\bfq^* \in \pa {\mathcal F}} ||\bfq - \bfq^*||, \quad \textrm{if} \quad \bfq \notin {\mathcal F},  
     \end{cases}
\end{align}
where $\pa {\mathcal F}$ is the boundary of ${\mathcal F}$.
\end{definition}

\section{Model-Based RL over Continuous $SE(3)$ Trajectory}

We approach the problem in the previous section by the following steps. First, we derive the gradient of the cumulative reward with respect to the policy parameters analytically by utilizing the $SE(3)$ pose kinematics and the mean and information update, similarly to iCR \citep{koga2021active}. Then, we design a neural network architecture to handle multiple targets and to enable learning over a varying number of targets.

\subsection{Analytical Policy Gradient}

The following proposition provides an update rule for the policy function parameters $\bftheta$ using the gradient of the reward function in \eqref{prob:reward-policy} with respect to $\bftheta$.


\begin{prop} \label{prop:gd-policy}
The gradient-ascent update for solving active exploration \eqref{prob:reward-policy}--\eqref{eq:info-update} with differentiable field of view \eqref{eq:M} is given by
\begin{align}
\theta^{(i+1)} =  \theta\ith + \gamma\ith \frac{\partial r^{\bfpi_{\bftheta}}}{\partial \theta\ith}, 
\end{align}
where $\gamma\ith \in \bbR_+$ for all $i \in \{1, \dots, n_p\}$ is a step size, and the gradient is given by 
\begin{align} 
\fr{\pa r^{\bfpi_{\bftheta}}}{\pa \theta^{(i)}} 
 &= \sum_{j=1}^{n_l} \tr \left( (Y_K^{(j)})^{-1} \Omega_K^{(j,i)} \right) ,  \label{log-grad-policy} 
\end{align}
where 
\begin{align}\label{eq:Omega-def}
    \Omega^{(j,i)}_k := \frac{\partial Y^{(j)}_k}{\partial \theta^{(i)}} \in \bbR^{n_y \times n_y}, \qquad \Lambda^{(i)}_k := \frac{\partial T_k}{\partial \theta^{(i)}} \in \bbR^{4 \times 4}
\end{align}
are obtained via:
  %
  \begin{align} \label{eq:Lambda-ini-policy} 
  \Lambda^{(i)}_{0}    &=  0, \quad \Omega^{(j,i)}_0 = 0, \quad \forall i \in \{1, \dots, n_p \}, \quad \forall j \in \{1, \dots, n_l\} \\
   \Omega^{(j,i)}_{k+1} &=  (  A^\top + Y_k^{(j)} A^{-1} W_k)^{-1} \Omega_k^{(j,i)} ( A + W_k A^{ -\top } Y_k^{(j)})^{-1} \notag\\
   & + \left( \Phi'( d(\bfq, {\mathcal F})) \frac{\partial d}{\partial \bfq} \right) \bigg|_{\bfq = \bfq(T_{k+1}, \bfp_{k+1}^{(j)})} Q T_{k+1}^{-1} \Lambda^{(i)}_{k+1} T_{k+1}^{-1} \underline{\bfp_{k+1}^{(j)}} H^\top V^{-1} H, \label{eq:Omega-update} \\
     \Lambda^{(i)}_{k+1} &=  \Lambda_k^{(i)} \exp (\tau_k \pi_{\bftheta}(\bfs_k)) + T_{k} \sum_{j=1}^{6} \bfe_{6,j}^\top \frac{\partial \bfpi_{\bftheta} (\bfs_k)}{\partial \bftheta} \bfe_{n_p,i} \frac{\partial \exp( \tau_k \hat \bfu )}{\partial \bfu^{(j)} } \bigg|_{\bfu = \bfpi_{\bftheta} (\bfs_k)}, \label{eq:Lambda-update}
  \end{align}
  where $\bfe_{n,m} \in \bbR^{n}$ is a $n$-dimensional unit vector whose $m$-th element is $1$ and all others are $0$.  
\end{prop}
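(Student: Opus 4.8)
The plan is to reduce the reward gradient to a trace expression and then establish the two forward sensitivity recursions by induction on $k$. Writing $r^{\bfpi_{\bftheta}} = \sum_{j=1}^{n_l}\log\det(Y_K^{(j)})$, I would first invoke the matrix-calculus identity $\pa_{\theta^{(i)}}\log\det(Y) = \tr(Y^{-1}\pa_{\theta^{(i)}} Y)$, which, together with the definition $\Omega_K^{(j,i)} := \pa Y_K^{(j)}/\pa\theta^{(i)}$ in \eqref{eq:Omega-def}, yields \eqref{log-grad-policy} immediately after summing over the targets. The remaining and substantive task is to compute $\Omega_k^{(j,i)}$ (and the auxiliary pose sensitivity $\Lambda_k^{(i)} := \pa T_k/\pa\theta^{(i)}$) recursively. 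Since the initial pose $T_0$ and information matrix $Y_0^{(j)}$ are prescribed data independent of $\bftheta$, the base case $\Lambda_0^{(i)} = 0$, $\Omega_0^{(j,i)} = 0$ in \eqref{eq:Lambda-ini-policy} holds, and I would propagate both quantities forward through the constraint maps \eqref{prob:model-policy}--\eqref{eq:M}.

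For the pose sensitivity, differentiate the kinematics $T_{k+1} = T_k\exp(\tau_k\bfpi_{\bftheta}(\bfs_k))$ with the product rule. The term from differentiating $T_k$ gives $\Lambda_k^{(i)}\exp(\tau_k\bfpi_{\bftheta}(\bfs_k))$, the first summand of \eqref{eq:Lambda-update}. The term from differentiating the exponential is handled by the chain rule through the control vector: writing $\pa\bfpi_{\bftheta}^{(\ell)}/\pa\theta^{(i)} = \bfe_{6,\ell}^\top(\pa\bfpi_{\bftheta}(\bfs_k)/\pa\bftheta)\bfe_{n_p,i}$ to extract the $(\ell,i)$ entry of the policy Jacobian and pairing it with $\pa\exp(\tau_k\hat\bfu)/\pa\bfu^{(\ell)}$, summed over the six control components, reproduces the second summand. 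Here one must use the left-trivialized differential of the $SE(3)$ exponential and the fact that $T_k$ multiplies on the left, so the factor $T_k$ appears outside the sum as written.

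For the information sensitivity, differentiate \eqref{eq:info-update}, $Y_{k+1}^{(j)} = P_{k+1}^{(j)} + M(T_{k+1},\bfp_{k+1}^{(j)})$, term by term. A key simplification is that the predicted mean \eqref{eq:mean-update} is driven only by $A$, $B$, and the known input $\bfxi_k^{(j)}$, hence $\pa\bfp_{k+1}^{(j)}/\pa\theta^{(i)} = 0$: the policy influences $Y_{k+1}^{(j)}$ only through the pose. Differentiating the prediction \eqref{eq:prediction-update} with the inverse-derivative identity $\pa(G^{-1}) = -G^{-1}(\pa G)G^{-1}$, where $G = A(Y_k^{(j)})^{-1}A^\top + W_k$, and then substituting $\pa(Y_k^{(j)})^{-1} = -(Y_k^{(j)})^{-1}\Omega_k^{(j,i)}(Y_k^{(j)})^{-1}$, produces $G^{-1}A(Y_k^{(j)})^{-1}\Omega_k^{(j,i)}(Y_k^{(j)})^{-1}A^\top G^{-1}$; the stated symmetric form in \eqref{eq:Omega-update} then follows from the algebraic identities $G^{-1}A(Y_k^{(j)})^{-1} = (A^\top + Y_k^{(j)}A^{-1}W_k)^{-1}$ and $(Y_k^{(j)})^{-1}A^\top G^{-1} = (A + W_k A^{-\top}Y_k^{(j)})^{-1}$, each verified by left/right multiplication. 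For the second term, the chain rule through $\bfq(T,\bfp) = QT^{-1}\underline{\bfp}$ and $\pa(T^{-1}) = -T^{-1}(\pa T)T^{-1}$ gives $\pa\bfq/\pa\theta^{(i)} = -QT_{k+1}^{-1}\Lambda_{k+1}^{(i)}T_{k+1}^{-1}\underline{\bfp_{k+1}^{(j)}}$, and differentiating $M = (1-\Phi(d))H^\top V^{-1}H$ brings down $-\Phi'(d)(\pa d/\pa\bfq)$; the two minus signs cancel, yielding the positive last term of \eqref{eq:Omega-update}.

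The main obstacle I anticipate is the algebraic reduction of $\pa P_{k+1}^{(j)}/\pa\theta^{(i)}$ to the compact symmetric sandwich in \eqref{eq:Omega-update}, which hinges on the two matrix identities above and on the invertibility of $A$, and, relatedly, the correct left-trivialized treatment of the $SE(3)$ exponential derivative. A subtler point requiring care is the dependence of the network input $\bfs_k = [\log(T_k)^{\vee}, \{\bfp_{k+1}^{(j)},\vech(P_{k+1}^{(j)})\}_{j}]$ on $\bftheta$: since $T_k$ and $P_{k+1}^{(j)}$ are themselves functions of $\bftheta$ (captured by $\Lambda_k^{(i)}$ and, through the prediction map, by $\Omega_k^{(j,i)}$), the factor $\pa\bfpi_{\bftheta}(\bfs_k)/\pa\bftheta$ must be read as the total derivative of the policy output, so that the forward recursion remains self-contained; I would make this bookkeeping explicit to ensure the gradient \eqref{log-grad-policy} is exact rather than an approximation.
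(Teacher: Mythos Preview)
Your proposal is correct and follows essentially the same route as the paper: apply the log-determinant derivative identity to obtain \eqref{log-grad-policy}, then differentiate the recursions \eqref{prob:model-policy}--\eqref{eq:M} to forward-propagate the sensitivities $\Lambda_k^{(i)}$ and $\Omega_k^{(j,i)}$, using the observation that $\bfp_{k+1}^{(j)}$ does not depend on $\bftheta$. Your treatment is considerably more detailed than the paper's short sketch---in particular the algebraic reduction of $\pa P_{k+1}^{(j)}/\pa\theta^{(i)}$ and the sign cancellation in the $M$-term are exactly the computations the paper omits---and your flagged concern about the implicit dependence of $\bfs_k$ on $\bftheta$ through $T_k$ and $P_{k+1}^{(j)}$ is a genuine subtlety that the paper's proof does not address.
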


\begin{proof}
    Taking the gradient of the reward \eqref{prob:reward-policy} directly leads to \eqref{log-grad-policy} by defining $\Omega_k^{(j,i)}$ as \eqref{eq:Omega-def}, which is the perturbation of the information matrix with respect to the policy parameter. Then, the update equation \eqref{eq:Omega-update} is derived by taking the gradient of both sides in \eqref{eq:info-update} and in \eqref{eq:prediction-update}. The same can be performed for the $SE(3)$ pose state to derive \eqref{eq:Lambda-update} from the gradient of \eqref{prob:model-policy}. Note that the prior mean is not affected by the control policy due to the update equation \eqref{eq:mean-update} and, hence, we do not need to define the perturbation of the prior mean.   
\end{proof}

\begin{figure*}
  \centering
  \includegraphics[width=\linewidth]{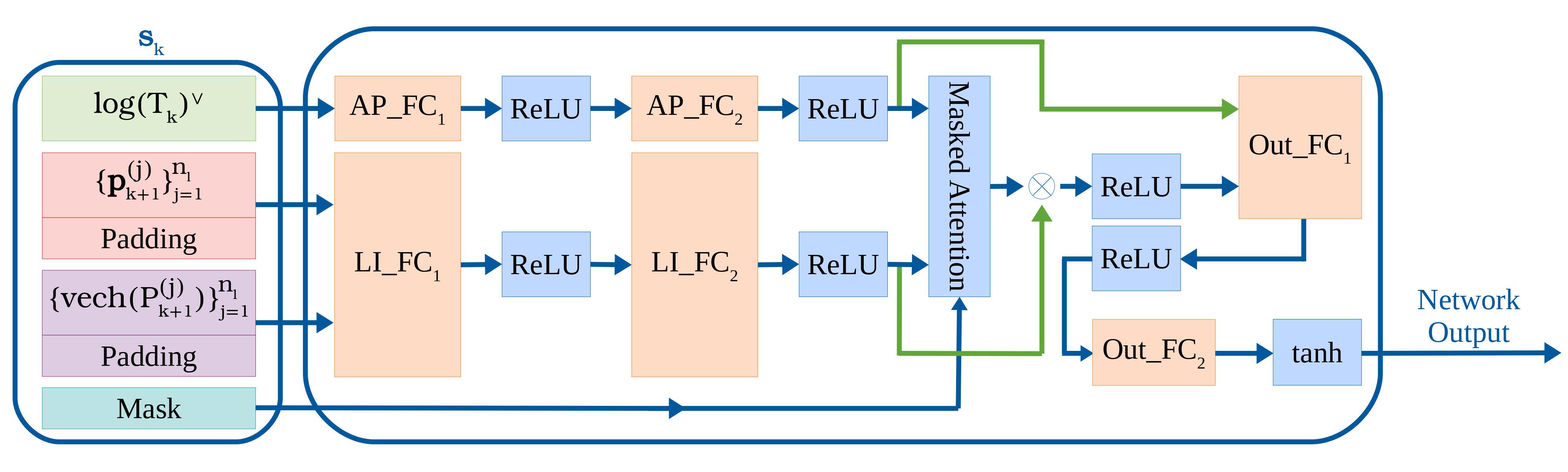}
  \caption{Deep neural network architecture used for the parameterized policy $\bfpi_{\bftheta}(\bfs_k)$. The input $\bfs_k$ contains the current agent pose in log representation $\log{(T_k)}^{\vee}$ as well as zero-padded predicted target states $\{\bfp_{k+1}^{(j)}\}_{j=1}^{n_l}$ and target information $\{P_{k+1}^{(j)}\}_{j=1}^{n_l}$. The \textit{Mask} vector indicates which elements of the padded target input contains relevant values, allowing to remove the influence of zero-padding in the final output. For each input, the network computes continuous controls $\bfu_k = [\bfv_k^\top, \bfomega_k^\top]^\top$.}
  \label{fig:network}
\end{figure*}

\subsection{Network Architecture}

In order to generalize training and testing to varying number of targets, we utilize a padding and masking scheme that allows tracking an arbitrary number of targets (up to a defined maximum $n_l^{\text{max}}$) while keeping the network architecture unchanged. We consider a fixed-length input vector with $n_l^{\text{max}} \times n_y$ elements for both the target state and target information, where only the first $n_l \times n_y$ elements contain non-zero values. Additionally, a binary \textit{Mask} vector contains instruction about which elements of the subsequent computations should be ignored to cancel out the effect of padding values in the output of the network. Fig.~\ref{fig:network} illustrates the policy network architecture using the padding and masking scheme, where similarly to \cite{yang2022learning} we employ an attention mechanism \citep{long2020evolutionary} so that the agent takes into account the relationship between its current pose state and the moving target states in order to prioritize observing uncertain targets. The fully-connected layers \textit{AP\_FC} and \textit{LI\_FC} alongside the \textit{ReLU} nonlinearities compute embeddings for the agent pose and target information, denoted as $\text{Emb}_a$ and $\text{Emb}_l$ respectively. The \textit{Masked Attention} block blends information from the agent, targets, and the masking as follows:
\begin{align}
    \label{eq:masking}
    \textit{Masked Attention}(\text{Emb}_a, \text{Emb}_l, \text{Mask}) = \textit{softmax}\left((1 + \log{(\text{Mask})}) \odot \left(\frac{\text{Emb}_a \text{Emb}_b^\top}{\alpha}\right) \right),
\end{align}
where the operator $\odot$ denotes to element-wise vector multiplication and $\alpha$ is a network hyper-parameter. As \eqref{eq:masking} shows, the components affected by padding can be nullified via the element-wise multiplication since the softmax operator eliminates the influence of the components corresponding to the zero elements of \textit{Mask} in the subsequent matrix multiplication with $\text{Emb}_l$. Therefore, for any input vector $\bfs_k$ with an arbitrary choice of targets, the policy network $\bfpi_{\bftheta}(\bfs_k)$ computes continuous controls $\bfu_k = [\bfv_k^\top, \bfomega_k^\top]^\top$.
\section{Experiments}
In this section, we examine the performance of both our proposed model-based RL and a benchmark model-free RL for active target tracking.  
We provide simulation results to visualize the tracking trajectories and quantitative comparison results of the reward value to demonstrate the robust performance of our model-based RL method.

\subsection{Experiment Settings}
In the evaluation, we consider 2-D target tracking using a ground vehicle governed by $SE(2)$ differential-drive kinematics and 2-D target positions as the target states ($n_y = 2$). As in practice, we control only the agent's forward and angular speeds within a limited range for linear velocity $v_x \in [0, 4]$ m/s and angular velocity $\omega \in [-\pi/3, \pi/3]$ rad/s. The agent
is equipped with a sensor detecting the relative position from the agent to the targets within a triangular FoV with depth of $2$ m and angular range of $2\pi/3$ rad/s. When the targets are inside the agent's FoV, their estimated position is updated based on \eqref{eq:KF-FoV}.

We set the matrices in the target model as $A = I$ and $B = I$. Regarding the known input $\bfxi \in \bbR^2$,  we tested two cases: \par
1) \textbf{Unbiased motion:} The known input $\bfxi$ is sampled from a uniform distribution as $\bfxi \sim \textrm{Uniform}[-\bar \xi, \bar \xi]$, i.e., the mean velocities are $0$ and hence the target motion is unbiased. The targets move within small areas based on their current position at every episode. \par
2) \textbf{Biased motion:} The absolute mean of the uniform distribution is set to larger than $0$, i.e., the targets have a base linear velocity heading to the same direction with small randomness at each time step. With this setting, the targets do not move too far from other targets during tracking. \par
We trained the neural network policy only from the biased target motion, because the mean of the uniform distribution is also sampled from the uniform distribution with zero mean, which includes the case of unbiased motion. Besides, the same hyper-parameters, environment settings, and network architecture were applied for model-free and model-based training. Specifically, regarding the environment, the smoothing factor $\kappa$, the magnitude of the Gaussian sensor noise $\sigma_1$, and the magnitude of the Gaussian motion noise $\sigma_2$ were set with values of $0.4$, $0.2$, and $0.05$, respectively. Besides, the time horizon and the targets' initialization position boundaries were set to the same constants dependent on the number of targets which is varied between $[3, 8]$ at each episode during training. For the policy network, the fully-connected layers $\textit{AP\_FC}_1$, $\textit{AP\_FC}_2$, $\textit{LI\_FC}_1$, $\textit{LI\_FC}_2$, $\textit{Out\_FC}_1$, and $\textit{Out\_FC}_2$ have $32$, $32$, $64$, $32$, $64$ and $2$ units, respectively. We choose a hyper-parameter $\alpha = 4$ throughout our experiments. For the model-free reinforcement learning baseline, we apply PPO \citep{schulman2017proximal} for training, while the model-based policy was directly trained using gradient ascent over a batch of the last $20$ episodes of an epoch, without a replay buffer.

\begin{table}[h]
\normalsize
\centering
\caption{{Comparison of the proposed model-based RL with the model-free RL.} The table shows the average and standard derivation for normalized rewards.}
\resizebox{.9\columnwidth}{!}{%
    \begin{tabular}{ccccc}
        \toprule
          \multirow{2}{*}{Method} &
          \multirow{2}{*}{Target Motion Model} &
          $\textrm{3 Targets}$ & $\textrm{5 Targets}$ & $\textrm{7 Targets}$ \\\cline{3-5}
           & & Episodic Reward & Episodic Reward & Episodic Reward \\
        \hline
         \multirow{2}{*}{\textit{Model-free RL}} & Unbiased Motion & 4.57 $\pm$ 2.13 & 3.65 $\pm$ 1.54 & 1.94 $\pm$ 1.70 \\
         & Biased Motion & 4.23 $\pm$ 1.77  & 3.27 $\pm$ 1.30 & 2.01 $\pm$ 1.67 
         
         \\ \hline
         \multirow{2}{*}{\textit{Model-based RL}} & Unbiased Motion & \textbf{6.71 $\pm$ 1.47} & \textbf{6.56 $\pm$ 0.55} & \textbf{5.41 $\pm$ 0.85} \\
         & Biased Motion & \textbf{6.87 $\pm$ 1.21} & \textbf{5.96 $\pm$ 0.96} & \textbf{4.92 $\pm$ 1.07}
         \\
        \bottomrule
    \end{tabular}}
\label{Quantitative_Results}
\end{table}

\subsection{Comparison Results and Analysis}
As shown in Fig.~\ref{fig:trajectories}, we compare the trajectories generated by model-free and model-based trained networks in three scenarios of $3$, $5$, and $7$ targets. In each scenario, the targets' initial position and velocity and the agent's initial pose are identical for fair comparisons. We can clearly see that the network trained with the model-based algorithm is capable of controlling the agent in a better manner for target tracking, while the network trained with the model-free algorithm renders the agent prone to move in a small area.

Quantitative comparisons are shown in Table \ref{Quantitative_Results}. The metric of the episodic reward is computed based on \eqref{prob:reward-policy}. We normalized the value by dividing by the number of targets at the end of each episode. We chose three random seeds $0$, $10$, and $100$ for both algorithms and tested all the models with two target motions in each scenario. With each setting in one scenario, both methods were tested for $30$ runs. According to Table \ref{Quantitative_Results}, model-based RL has an overall better performance in terms of the larger episodic reward with smaller variance for both unbiased and biased target motion. It is also observed that the average reward is inversely proportional to the number of targets for both methods. We conjecture that when the number of targets increases while the size of the map is also larger, efficient planning for target tracking becomes more challenging accordingly.

\begin{figure*}[t]
  \centering
  \resizebox{1\textwidth}{!}{
  \begin{tabular}{ccccc}
      Num of Targets & Step 0 & $\frac{1}{3}$Horizon & $\frac{2}{3}$Horizon & End of the Episode\\
     3 Targets & 
     \multicolumn{1}{m{5cm}}{\includegraphics[height=2in, trim={0.5cm .4cm .2cm .cm}, clip]{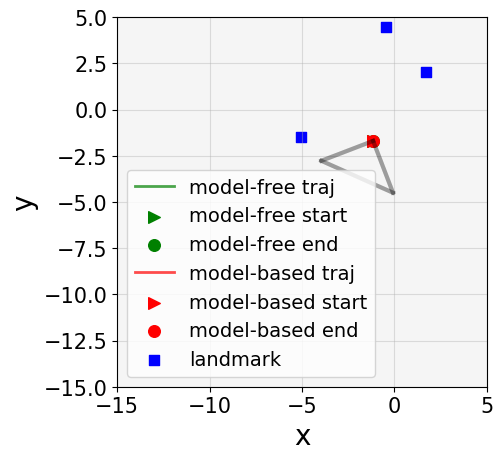}} & 
     \multicolumn{1}{m{5cm}}{\includegraphics[height=2in, trim={1cm .4cm .2cm .2cm}, clip]{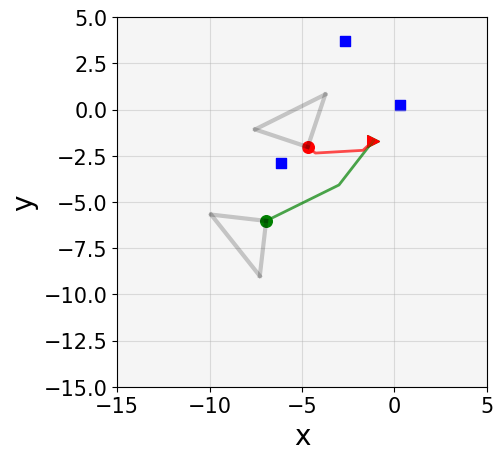}} & 
     \multicolumn{1}{m{5cm}}{\includegraphics[height=2in, trim={1cm .4cm .2cm .2cm}, clip]{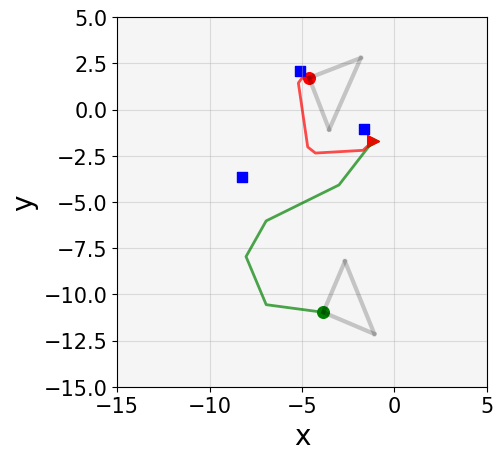}} & 
     \multicolumn{1}{m{5cm}}{\includegraphics[height=2in, trim={1cm .4cm .2cm .2cm}, clip]{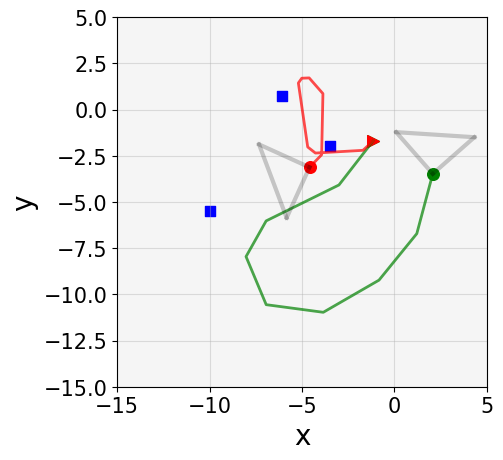}}\\
     5 Targets & 
     \multicolumn{1}{m{5cm}}{\includegraphics[height=2in, trim={0.5cm .4cm .2cm .2cm}, clip]{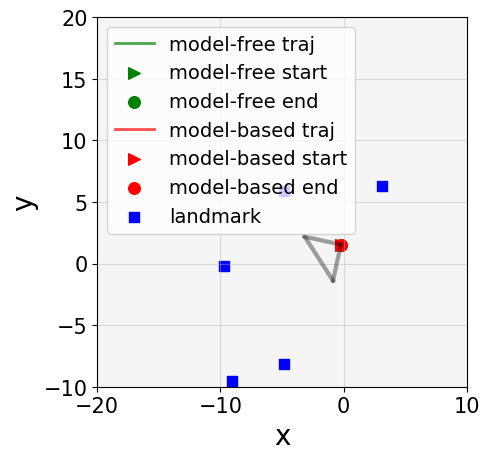}} & 
     \multicolumn{1}{m{5cm}}{\includegraphics[height=2in, trim={1cm .4cm .2cm .2cm}, clip]{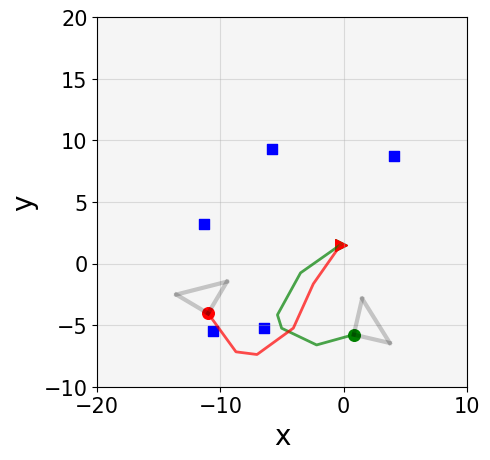}} & 
     \multicolumn{1}{m{5cm}}{\includegraphics[height=2in, trim={1cm .4cm .2cm .2cm}, clip]{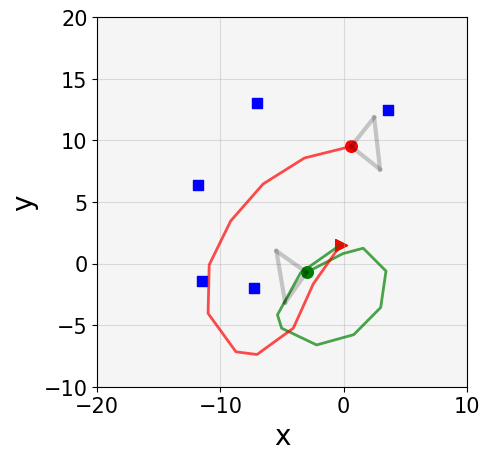}} & 
     \multicolumn{1}{m{5cm}}{\includegraphics[height=2in, trim={1cm .4cm .2cm .2cm}, clip]{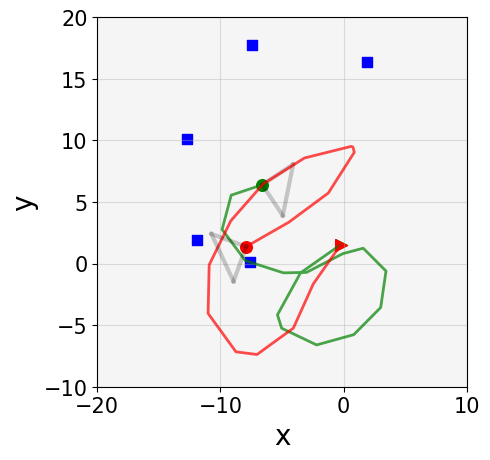}}\\
     7 Targets & 
     \multicolumn{1}{m{5cm}}{\includegraphics[height=2in, trim={0.5cm .4cm .2cm .2cm}, clip]{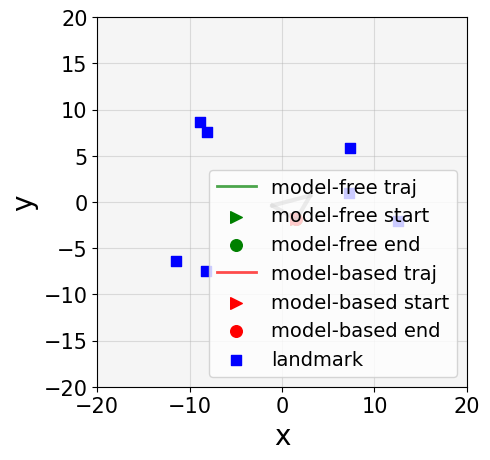}} & 
     \multicolumn{1}{m{5cm}}{\includegraphics[height=2in, trim={1cm .4cm .2cm .2cm}, clip]{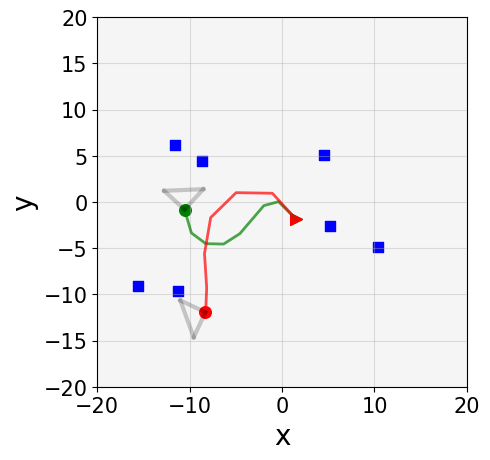}} & 
     \multicolumn{1}{m{5cm}}{\includegraphics[height=2in, trim={1cm .4cm .2cm .2cm}, clip]{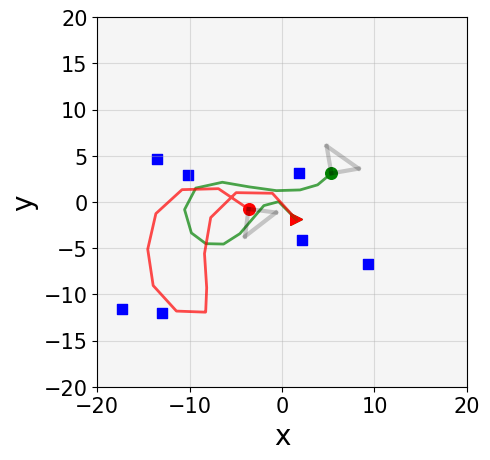}} & 
     \multicolumn{1}{m{5cm}}{\includegraphics[height=2in, trim={1cm .4cm .2cm .2cm}, clip]{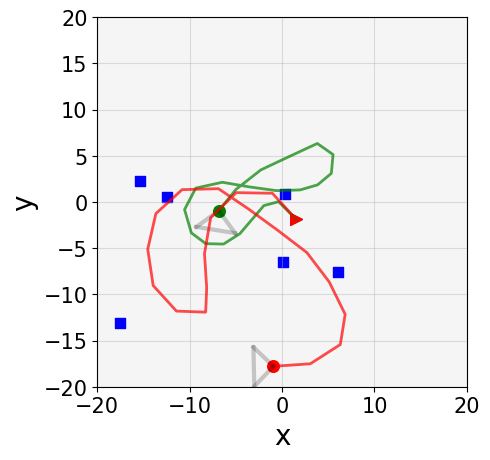}}\\
     
  \end{tabular}}

  \caption{Sensing agent trajectories for target tracking. Two methods are compared in three scenarios with $3$, $5$, and $7$ targets. The blue squares represent the moving targets. The green and red curves show the sensing trajectories generated by model-free and model-based trained networks. The grey triangles illustrate the agents' forward triangular field of view.}
  \label{fig:trajectories}
\end{figure*}

\section{Conclusion}

This paper proposed a model-based reinforcement learning algorithm for tracking multiple dynamic targets using a mobile agent with limited FoV. The prior and posterior mean and information matrix of each target state were obtained by Kalman filtering. We derived an analytical gradient of the target entropy cost function with respect to the parameters of the control policy network by introducing a differentiable FoV and using perturbation of the $SE(3)$ state and the information matrix to obtain a continuous control policy. We observed that our model-based RL algorithm achieves better multi-target tracking in a simulated environment than a model-free RL algorithm based on proximal policy optimization. In future research, we will consider learning the policy for an unknown number of targets, in the presence of obstacles in the environment, and for target tracking by a team of agents with limited communication.

\acks{We gratefully acknowledge support from NSF FRR CAREER 2045945 and ARL DCIST CRA W911NF17-2-0181.}

\bibliography{BIB_L4DC23.bib}

\begin{thebibliography}{36}
\providecommand{\natexlab}[1]{#1}
\providecommand{\url}[1]{\texttt{#1}}
\expandafter\ifx\csname urlstyle\endcsname\relax
  \providecommand{\doi}[1]{doi: #1}\else
  \providecommand{\doi}{doi: \begingroup \urlstyle{rm}\Url}\fi

\bibitem[Asgharivaskasi et~al.(2022)Asgharivaskasi, Koga, and
  Atanasov]{asgharivaskasi2022active}
Arash Asgharivaskasi, Shumon Koga, and Nikolay Atanasov.
\newblock Active mapping via gradient ascent optimization of shannon mutual
  information over continuous {$SE(3)$} trajectories.
\newblock In \emph{IEEE/RSJ International Conference on Intelligent Robots and
  Systems (IROS)}, pages 12994--13001, 2022.

\bibitem[Atanasov et~al.(2014)Atanasov, {Le Ny}, Daniilidis, and
  Pappas]{Atanasov14ICRA}
Nikolay Atanasov, Jerome {Le Ny}, Kostas Daniilidis, and George~J Pappas.
\newblock Information acquisition with sensing robots: Algorithms and error
  bounds.
\newblock In \emph{IEEE International Conference on Robotics and Automation
  (ICRA)}, pages 6447--6454, 2014.

\bibitem[Atanasov et~al.(2015)Atanasov, Le~Ny, Daniilidis, and
  Pappas]{atanasov2015decentralized}
Nikolay Atanasov, Jerome Le~Ny, Kostas Daniilidis, and George~J Pappas.
\newblock Decentralized active information acquisition: Theory and application
  to multi-robot {SLAM}.
\newblock In \emph{IEEE International Conference on Robotics and Automation
  (ICRA)}, pages 4775--4782, 2015.

\bibitem[Barfoot(2017)]{barfoot2017state}
Timothy~D Barfoot.
\newblock \emph{State estimation for robotics}.
\newblock Cambridge University Press, 2017.

\bibitem[Bishop(2006)]{bishop2006pattern}
Christopher~M Bishop.
\newblock \emph{Pattern recognition and machine learning}.
\newblock Springer, 2006.

\bibitem[Cadena et~al.(2016)Cadena, Carlone, Carrillo, Latif, Scaramuzza,
  Neira, Reid, and Leonard]{cadena2016past}
Cesar Cadena, Luca Carlone, Henry Carrillo, Yasir Latif, Davide Scaramuzza,
  Jos{\'e} Neira, Ian Reid, and John~J Leonard.
\newblock Past, present, and future of simultaneous localization and mapping:
  Toward the robust-perception age.
\newblock \emph{IEEE Transactions on Robotics}, 32\penalty0 (6):\penalty0
  1309--1332, 2016.

\bibitem[Cai et~al.(2021)Cai, Schlotfeldt, Khosoussi, Atanasov, Pappas, and
  How]{cai2021non}
Xiaoyi Cai, Brent Schlotfeldt, Kasra Khosoussi, Nikolay Atanasov, George~J
  Pappas, and Jonathan~P How.
\newblock Non-monotone energy-aware information gathering for heterogeneous
  robot teams.
\newblock In \emph{IEEE International Conference on Robotics and Automation
  (ICRA)}, pages 8859--8865, 2021.

\bibitem[Chaplot et~al.(2020)Chaplot, Gandhi, Gupta, Gupta, and
  Salakhutdinov]{chaplot2020learning}
Devendra~Singh Chaplot, Dhiraj Gandhi, Saurabh Gupta, Abhinav Gupta, and Ruslan
  Salakhutdinov.
\newblock Learning to explore using active neural slam.
\newblock \emph{arXiv preprint arXiv:2004.05155}, 2020.

\bibitem[Chen et~al.(2020)Chen, Martin, Huang, Wang, and
  Englot]{chen2020autonomous}
Fanfei Chen, John~D Martin, Yewei Huang, Jinkun Wang, and Brendan Englot.
\newblock Autonomous exploration under uncertainty via deep reinforcement
  learning on graphs.
\newblock In \emph{IEEE/RSJ International Conference on Intelligent Robots and
  Systems (IROS)}, pages 6140--6147, 2020.

\bibitem[Chung et~al.(2011)Chung, Hollinger, and Isler]{chung2011search}
Timothy~H Chung, Geoffrey~A Hollinger, and Volkan Isler.
\newblock Search and pursuit-evasion in mobile robotics.
\newblock \emph{Autonomous robots}, 31\penalty0 (4):\penalty0 299--316, 2011.

\bibitem[Grocholsky et~al.(2006)Grocholsky, Keller, Kumar, and
  Pappas]{grocholsky2006cooperative}
Ben Grocholsky, James Keller, Vijay Kumar, and George Pappas.
\newblock Cooperative air and ground surveillance.
\newblock \emph{IEEE Robotics \& Automation Magazine}, 13\penalty0
  (3):\penalty0 16--25, 2006.

\bibitem[Hsu et~al.(2021)Hsu, Jeong, Pappas, and Chaudhari]{hsu2021scalable}
Christopher~D Hsu, Heejin Jeong, George~J Pappas, and Pratik Chaudhari.
\newblock Scalable reinforcement learning policies for multi-agent control.
\newblock In \emph{IEEE/RSJ International Conference on Intelligent Robots and
  Systems (IROS)}, pages 4785--4791, 2021.

\bibitem[Janner et~al.(2019)Janner, Fu, Zhang, and Levine]{janner2019trust}
Michael Janner, Justin Fu, Marvin Zhang, and Sergey Levine.
\newblock When to trust your model: Model-based policy optimization.
\newblock \emph{Advances in Neural Information Processing Systems}, 32, 2019.

\bibitem[Jeong et~al.(2019)Jeong, Schlotfeldt, Hassani, Morari, Lee, and
  Pappas]{jeong2019learning}
Heejin Jeong, Brent Schlotfeldt, Hamed Hassani, Manfred Morari, Daniel~D Lee,
  and George~J Pappas.
\newblock Learning q-network for active information acquisition.
\newblock In \emph{IEEE/RSJ International Conference on Intelligent Robots and
  Systems (IROS)}, pages 6822--6827, 2019.

\bibitem[Julian and Kochenderfer(2019)]{julian2019distributed}
Kyle~D Julian and Mykel~J Kochenderfer.
\newblock Distributed wildfire surveillance with autonomous aircraft using deep
  reinforcement learning.
\newblock \emph{Journal of Guidance, Control, and Dynamics}, 42\penalty0
  (8):\penalty0 1768--1778, 2019.

\bibitem[Kantaros et~al.(2019)Kantaros, Schlotfeldt, Atanasov, and
  Pappas]{kantaros2019asymptotically}
Yiannis Kantaros, Brent Schlotfeldt, Nikolay Atanasov, and George~J Pappas.
\newblock Asymptotically optimal planning for non-myopic multi-robot
  information gathering.
\newblock In \emph{Robotics: Science and Systems (RSS)}, pages 22--26, 2019.

\bibitem[Koga et~al.(2021)Koga, Asgharivaskasi, and Atanasov]{koga2021active}
Shumon Koga, Arash Asgharivaskasi, and Nikolay Atanasov.
\newblock Active exploration and mapping via iterative covariance regulation
  over continuous {$SE (3)$} trajectories.
\newblock In \emph{IEEE/RSJ International Conference on Intelligent Robots and
  Systems (IROS)}, pages 2735--2741, 2021.

\bibitem[Koga et~al.(2022)Koga, Asgharivaskasi, and Atanasov]{koga2022active}
Shumon Koga, Arash Asgharivaskasi, and Nikolay Atanasov.
\newblock Active {SLAM} over continuous trajectory and control: A
  covariance-feedback approach.
\newblock In \emph{American Control Conference (ACC)}, pages 5062--5068, 2022.

\bibitem[Kumar et~al.(2004)Kumar, Rus, and Singh]{kumar2004robot}
Vijay Kumar, Daniela Rus, and Sanjiv Singh.
\newblock Robot and sensor networks for first responders.
\newblock \emph{IEEE Pervasive computing}, 3\penalty0 (4):\penalty0 24--33,
  2004.

\bibitem[Le~Ny and Pappas(2009)]{le2009trajectory}
Jerome Le~Ny and George~J Pappas.
\newblock On trajectory optimization for active sensing in gaussian process
  models.
\newblock In \emph{IEEE Conference on Decision and Control (CDC)}, pages
  6286--6292, 2009.

\bibitem[Levine and Abbeel(2014)]{levine2014learning}
Sergey Levine and Pieter Abbeel.
\newblock Learning neural network policies with guided policy search under
  unknown dynamics.
\newblock \emph{Advances in neural information processing systems}, 27, 2014.

\bibitem[Levine and Koltun(2013)]{levine2013guided}
Sergey Levine and Vladlen Koltun.
\newblock Guided policy search.
\newblock In \emph{International Conference on Machine Learning (ICML)}, pages
  1--9. PMLR, 2013.

\bibitem[Levine et~al.(2016)Levine, Finn, Darrell, and Abbeel]{levine2016end}
Sergey Levine, Chelsea Finn, Trevor Darrell, and Pieter Abbeel.
\newblock End-to-end training of deep visuomotor policies.
\newblock \emph{The Journal of Machine Learning Research}, 17\penalty0
  (1):\penalty0 1334--1373, 2016.

\bibitem[Lillicrap et~al.(2015)Lillicrap, Hunt, Pritzel, Heess, Erez, Tassa,
  Silver, and Wierstra]{lillicrap2015continuous}
Timothy~P Lillicrap, Jonathan~J Hunt, Alexander Pritzel, Nicolas Heess, Tom
  Erez, Yuval Tassa, David Silver, and Daan Wierstra.
\newblock Continuous control with deep reinforcement learning.
\newblock \emph{arXiv preprint arXiv:1509.02971}, 2015.

\bibitem[Lodel et~al.(2022)Lodel, Brito, Serra-G{\'o}mez, Ferranti,
  Babu{\v{s}}ka, and Alonso-Mora]{lodel2022look}
Max Lodel, Bruno Brito, Alvaro Serra-G{\'o}mez, Laura Ferranti, Robert
  Babu{\v{s}}ka, and Javier Alonso-Mora.
\newblock Where to look next: Learning viewpoint recommendations for
  informative trajectory planning.
\newblock In \emph{IEEE International Conference on Robotics and Automation
  (ICRA)}, pages 4466--4472, 2022.

\bibitem[Long et~al.(2020)Long, Zhou, Gupta, Fang, Wu, and
  Wang]{long2020evolutionary}
Qian Long, Zihan Zhou, Abhibav Gupta, Fei Fang, Yi~Wu, and Xiaolong Wang.
\newblock Evolutionary population curriculum for scaling multi-agent
  reinforcement learning.
\newblock \emph{arXiv preprint arXiv:2003.10423}, 2020.

\bibitem[Luo et~al.(2019)Luo, Solowjow, Wen, Ojea, Agogino, Tamar, and
  Abbeel]{luo2019reinforcement}
Jianlan Luo, Eugen Solowjow, Chengtao Wen, Juan~Aparicio Ojea, Alice~M Agogino,
  Aviv Tamar, and Pieter Abbeel.
\newblock Reinforcement learning on variable impedance controller for
  high-precision robotic assembly.
\newblock In \emph{IEEE International Conference on Robotics and Automation
  (ICRA)}, pages 3080--3087, 2019.

\bibitem[Mnih et~al.(2015)Mnih, Kavukcuoglu, Silver, Rusu, Veness, Bellemare,
  Graves, Riedmiller, Fidjeland, Ostrovski, et~al.]{mnih2015human}
Volodymyr Mnih, Koray Kavukcuoglu, David Silver, Andrei~A Rusu, Joel Veness,
  Marc~G Bellemare, Alex Graves, Martin Riedmiller, Andreas~K Fidjeland, Georg
  Ostrovski, et~al.
\newblock Human-level control through deep reinforcement learning.
\newblock \emph{nature}, 518\penalty0 (7540):\penalty0 529--533, 2015.

\bibitem[Placed et~al.(2023)Placed, Strader, Carrillo, Atanasov, Indelman,
  Carlone, and Castellanos]{placed2022survey}
Julio~A Placed, Jared Strader, Henry Carrillo, Nikolay Atanasov, Vadim
  Indelman, Luca Carlone, and Jos{\'e}~A Castellanos.
\newblock A survey on active simultaneous localization and mapping: State of
  the art and new frontiers.
\newblock \emph{IEEE Transactions on Robotics}, 2023.

\bibitem[Schlotfeldt et~al.(2018)Schlotfeldt, Thakur, Atanasov, Kumar, and
  Pappas]{schlotfeldt2018anytime}
Brent Schlotfeldt, Dinesh Thakur, Nikolay Atanasov, Vijay Kumar, and George~J
  Pappas.
\newblock Anytime planning for decentralized multirobot active information
  gathering.
\newblock \emph{IEEE Robotics and Automation Letters}, 3\penalty0 (2):\penalty0
  1025--1032, 2018.

\bibitem[Schlotfeldt et~al.(2019)Schlotfeldt, Atanasov, and
  Pappas]{schlotfeldt2019maximum}
Brent Schlotfeldt, Nikolay Atanasov, and George~J Pappas.
\newblock Maximum information bounds for planning active sensing trajectories.
\newblock In \emph{IEEE/RSJ International Conference on Intelligent Robots and
  Systems (IROS)}, pages 4913--4920, 2019.

\bibitem[Schulman et~al.(2017)Schulman, Wolski, Dhariwal, Radford, and
  Klimov]{schulman2017proximal}
John Schulman, Filip Wolski, Prafulla Dhariwal, Alec Radford, and Oleg Klimov.
\newblock Proximal policy optimization algorithms.
\newblock \emph{arXiv:1707.06347}, 2017.

\bibitem[Sutton and Barto(2018)]{sutton2018reinforcement}
Richard~S Sutton and Andrew~G Barto.
\newblock \emph{Reinforcement learning: An introduction}.
\newblock MIT press, 2018.

\bibitem[Tang and Ha(2021)]{tang2021sensory}
Yujin Tang and David Ha.
\newblock The sensory neuron as a transformer: Permutation-invariant neural
  networks for reinforcement learning.
\newblock \emph{Advances in Neural Information Processing Systems},
  34:\penalty0 22574--22587, 2021.

\bibitem[Wang and Englot(2020)]{wang2020autonomous}
Jinkun Wang and Brendan Englot.
\newblock Autonomous exploration with expectation-maximization.
\newblock In \emph{Robotics Research}, pages 759--774. Springer, 2020.

\bibitem[Yang et~al.(2023)Yang, Liu, Koga, Asgharivaskasi, and
  Atanasov]{yang2022learning}
Pengzhi Yang, Yuhan Liu, Shumon Koga, Arash Asgharivaskasi, and Nikolay
  Atanasov.
\newblock Learning continuous control policies for information-theoretic active
  perception.
\newblock In \emph{IEEE International Conference on Robotics and Automation
  (ICRA)}, 2023.

\end{thebibliography}

\end{document}